\newtheorem{theorem}{Theorem}[section]
\newtheorem{lemma}[theorem]{Lemma}
\newcommand{\reals}{\mathbb{R}}
\title{\Large\bf
Adaptive Robot Detumbling of a Non-Rigid Satellite}
\author{Longsen Gao$^{1}$, Claus Danielson$^{2}$, and Rafael Fierro$^{1}$
\thanks{This material is based on research sponsored by Air Force Research Laboratory (AFRL) under agreements FA9453-18-2-0022 and FA9550-22-1-0093.}
\thanks{Longsen Gao and Rafael Fierro are with the Electrical and Computer Engineering Department, The University of New Mexico, Albuquerque, NM 87131, USA. {\tt\small \{lgao1, rfierro\}@unm.edu}}
\thanks{Claus Danielson are with the Mechanical Engineering Department, The University of New Mexico, Albuquerque, NM 87131, USA.{\tt\small \{cdanielson\}@unm.edu}}
}
\begin{document}

\maketitle

\begin{abstract}
The challenge of satellite stabilization, particularly those with uncertain flexible dynamics, has become a pressing concern in control and robotics. These uncertainties, especially the dynamics of a third-party client satellite, significantly complicate the stabilization task. This paper introduces a novel adaptive detumbling method to handle non-rigid satellites with unknown motion dynamics (translation and rotation). The distinctive feature of our approach is that we model the non-rigid tumbling satellite as a two-link serial chain with unknown stiffness and damping in contrast to previous detumbling research works which consider the satellite a rigid body. We develop a novel adaptive robotics approach to detumble the satellite by using two space tugs as servicer despite the uncertain dynamics in the post-capture case. Notably, the stiffness properties and other physical parameters, including the mass and inertia of the two links, remain unknown to the servicer. Our proposed method addresses the challenges in detumbling tasks and paves the way for advanced manipulation of non-rigid satellites with uncertain dynamics.
\end{abstract}

\section{Introduction}\label{sec:intro}
Maintenance and repair of aging satellites is becoming increasingly important as near-earth space becomes more crowded. Often the first step in servicing is for the servicer satellite to gain custody and stabilize the damaged client satellite. The problem of a satellite self-stabilizing has been well-studied~\cite{kunciw1976optimal,bak1996autonomous,ward1997,kucharski2009}. Early efforts focused on passive methods on client satellites, such as magnetic coils and gravity-gradient booms, to stabilize satellites~\cite{bayat2009stabilization,ivanov2013hysteresis}, while modern satellites typically use reaction wheels and control moment gyroscopes for active self-stabilization~\cite{votel2012comparison,shen2018rigid,leve2015spacecraft}. However, if the passive or active systems malfunction then the client satellite will not be capable to self-stabilizing. This paper considers the challenging problem of using a robotic servicer satellite to grasp and stabilizing a malfunctioning client satellite which has been studied in \cite{gang2020detumbling,yan2021trajectory,chen2023optimal,down2023adaptive,danielson2024experimental}.

Satellite detumbling is a variation of the satellite stabilization problems. In the satellite detumbling problem, a predominant assumption is that the satellite's rotation is the primary concern, while the manipulator system and the satellite maintain a static relative translational position\cite{aghili2009time,luo2014survey,liu2019active,sugai2013detumbling}. While applicable in many conditions, this approach poses limitations when addressing more complex scenarios. Specifically, in situations where a satellite is malfunctioning, it is imperative to halt not only its rotational motion but to stabilize its orbit\cite{taylor2006orbital}. Failing to control both aspects can cause the client satellite to leave its designated orbital window. Thus, there's a pressing need to develop comprehensive control mechanisms for both rotational and translational detumbling. 

Another assumption in the majority of detumbling work is that the tumbling satellite is treated as a rigid body, which ignores the non-rigid dynamics of the satellite e.g. solar panels. Detumbling is challenging due to the non-rigid dynamics of the client satellite which are uncertain.

\begin{figure}[!t]
\centering
\includegraphics[width=2.6in]{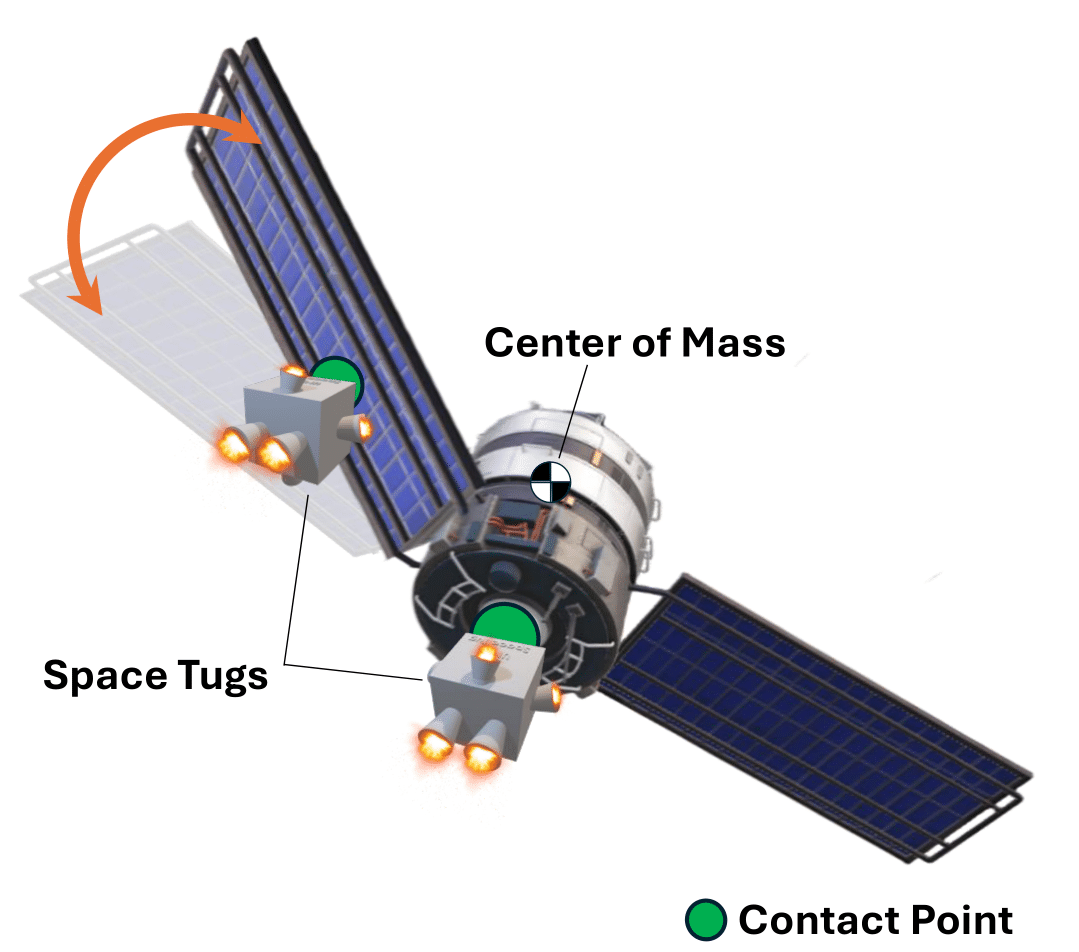}
\caption{Two space tugs collaboratively detumble a non-rigid satellite together in Space environment.}
\label{fig_1}
\end{figure}

Furthermore, when servicing a third-party satellite, the physical parameters will be highly uncertain.  For instance, the mass and inertia of the client satellite will be uncertain especially when the satellite has been operating for a long period leading to an expenditure of fuel\cite{clemen2002onorbit,culbertson2021decentralized,stolfi2018parametric}. Similarly, the stiffness properties of the satellite, which can vary due to factors such as temperature changes and material degradation, can affect the interaction dynamics between the servicer and the satellite \cite{nenarokomov2019environmental}. Finally, the points on the client satellite where the robotic servicer makes contact will be highly uncertain. These uncertainties have been a major bottleneck in developing reliable and robust detumbling methods.

This paper addresses the pressing need for adaptive detumbling methods that can handle the uncertainties of the non-rigid client satellite dynamics. In this paper, first we introduce the analysis of a non-rigid satellite which transforms a malfunctioning satellite model into a more manageable ``two-link chain" model. This transformation not only facilitates easier analysis of the non-rigid satellite's dynamics and stiffness but also forms the basis for our detumbling approach. Then, we propose an adaptive detumbling method applied on two space tugs to stop the motion of the tumbling satellite under those uncertainties. Fig.\ref{fig_1} shows our general concept using two space tugs  detumbling a non-rigid satellite cooperatively in the space environment.

The \textbf{contributions} of this paper are twofold: \textbf{(i)} We consider a more complex detumbling scenario in which the non-rigid client satellite is modeled as a two-link serial chain, and \textbf{(ii)} introduce an adaptive detumbling method that operates effectively under uncertainties, including unknown mass, inertia, stiffness properties and the location of contact points. This method represents a robust solution to the longstanding challenges associated with satellite detumbling.

\section{On-Orbit Detumbling Problem}\label{sec:method}
This section describes the detumbling problem for the client satellite, including the dynamics of the client satellite and the grasping dynamics from two space tugs.

\subsection{Agents and Reference Frames}
This paper proposes a novel methodology for detumbling a client satellite by using two space tugs. We consider two agents: the tumbling client satellite and the space tug, described below:

\textbf{Tumbling client satellite} The client tumbling satellite is composed of a base and two solar panels: one solar panel is connected to the base through a functioning hinge stability, which we treat as a ``fixed joint"; the other solar panel is also connected to the base but through a malfunctioning hinge, which we will model as a ``revolute joint" that has unknown dynamics. To simplify the non-rigid satellite model, here we convert the satellite into a ``two-link chain" model as following in detail:
\begin{itemize}
 \item \textbf{Link-1} As Fig. \ref{fig_2} shows, the client satellite is comprised of the base (small orange square with solid black outline) and the solar panel (bottom blue rectangle) and their functioning ``fixed joint" as the combined module, which is our defined ``\textit{Link-1}". The combined module (orange rectangle with black dash line) can be assumed as a normal rigid body with unknown physical properties e.g. mass, inertia.
 \item \textbf{Link-2} As Fig. \ref{fig_2} shows, the solar panel (upper blue rectangle) and hybrid hinge (yellow rectangle) are composed as our defined ``\textit{Link-2}". We assume related physical properties are also unknown, e.g. mass, inertia, and hinge stiffness (friction, damper, spring). 
\end{itemize}
Fig.\ref{fig_3} shows a simplified diagram of the ``two-chain link" model as we explained above.

\textbf{Space Tug}
A space tug is an independently operating, self-propelled spacecraft for assembly, maintenance, repair, and contingency operations in $SE(3)$~\cite{gao2023autonomous}. Based on our previous work, we upgraded our space tug structure design in this paper as shown in Fig.\ref{fig_1} to ensure each tug with 6 rockets in proper directions that can push or pull in 3D space to output the desired wrench to the external environment.

We define the world-frame, denoted as $\mathcal{O}$, located at a nearby space station which would be used for maintenance tasks. In addition, we define two body-fixed frames designated as $\alpha$ and $\beta$ located on the center-of-mass of \textit{Link-1} and \textit{Link-2}, respectively. Both body-fixed frames are located at the center of mass of their respective links. Here we set up two estimated points $P_1$ and $P_2$ on \textit{Link-1} and \textit{Link-2}, respectively, as shown on Fig.\ref{fig_2}. The two estimated points are used for each space tug to share common measurement information to estimate each link's unknown properties concerning their locations. Let $\mathbf{q}_b, \mathbf{q}_s \in \reals^6$ be the combined position and orientation for \textit{Link-1} and \textit{Link-2}, respectively. $\mathbf{q}_b = [\mathbf{p}_b^\top, \boldsymbol{\theta}_b^\top]^\top \in \reals^{6 \times 1}$ in which $\mathbf{p}_b \in \reals^3$ and $\boldsymbol{\theta}_b \in \reals^{3}$ denote the position and Euler angle along each axis of $P_1$ relative to frame $\mathcal{O}$, respectively, and $\mathbf{q}_s = [\mathbf{p}_s^\top, \boldsymbol{\theta}_s^\top]^\top \in \reals^{6 \times 1}$ in which $\mathbf{p}_s \in \reals^3$ and $\boldsymbol{\theta}_s \in \reals^{3}$ denote the position and Euler angle along each axis of $P_2$ relative to frame $\mathcal{O}$, respectively. 
Note that we assume that the two estimated points $P_1$ and $P_2$ are closed to each other and also closed to the revolute joint which the related revolution of \textit{Link-1} and \textit{Link-2} has a minor effect on the relative movement between $\mathbf{p}_b$ and $\mathbf{p}_s$. So, in this case, we assume $\mathbf{p}_b \approx \mathbf{p}_s$ and the term $\mathbf{q}_s$ can be replaced by $\mathbf{q}_b$ with another term which will be introduced in Section \ref{sec::sub-D} to keep only one state variable in the whole dynamics system and simplify the controller design and stability analysis in Section \ref{sec::method-go}.
 $\mathbf{d}_b$ and $\mathbf{d}_s$ denote the position of the estimated point contact frame $\alpha$ and $\beta$, respectively.  We use $\mathbf{^{\alpha}_{\mathit{o}}R}$ and $\mathbf{^{\beta}_{\mathit{o}}R}$ denote the orientation for \textit{Link-1} and \textit{Link-2} in frame $\mathcal{O}$, respectively. The grasping point which located on \textit{Link-1} at $\mathbf{\bar d}_1$ in frame $\alpha$; another grasping point which located on \textit{Link-2} at $\mathbf{\bar d}_2$ in frame $\beta$. 
\begin{figure}[!t]
\centering
\includegraphics[width=3.4in]{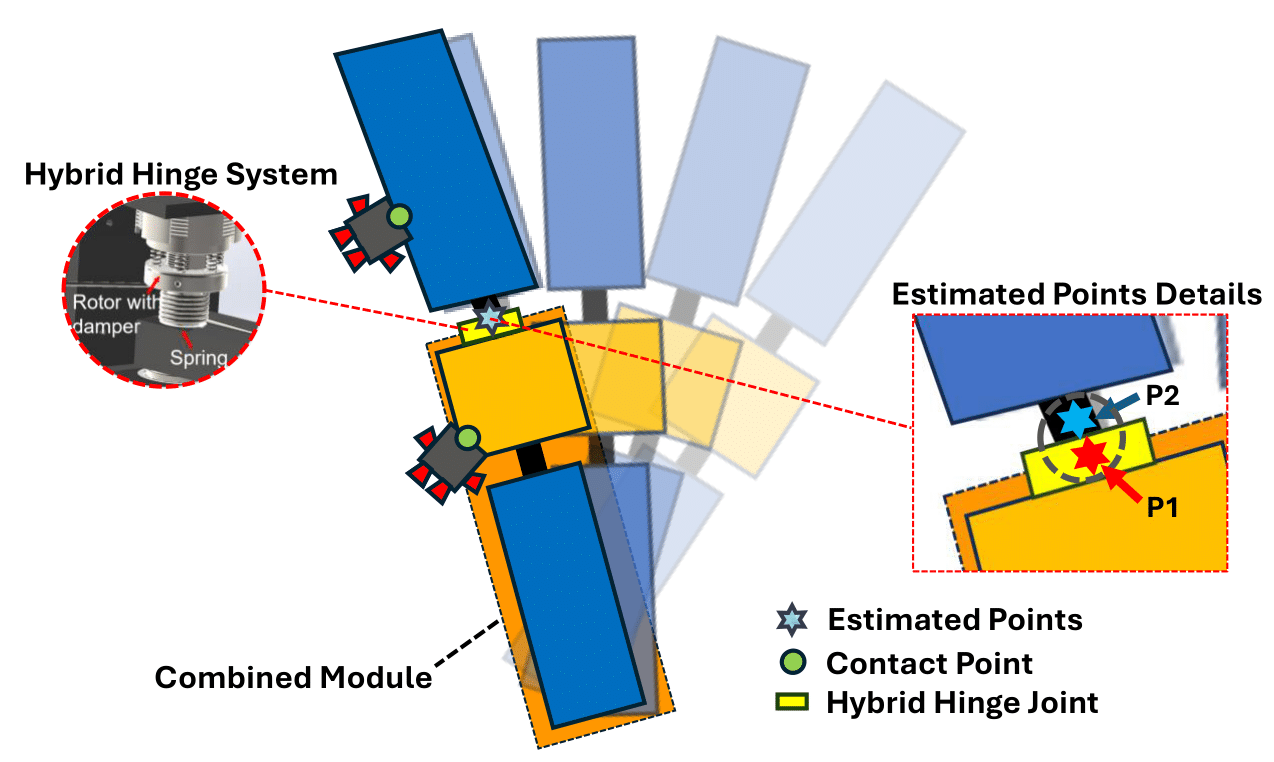}
\caption{Two free-flying space tugs de-tumble a client satellite. One space tug holds the base of the satellite; another space tug the solar panel, which is connected to the base by a hybrid stiffness system.}
\label{fig_2}
\end{figure}

\subsection{Dynamics of Link-1}
\label{dynamics-link1}
 To model the effect of the non-rigid part to the total wrench related the estimated point, we need to find the relative rotation between \textit{Link-1} and \textit{Link-2} as shown in Fig.\ref{fig_3}
\begin{equation}
 \mathbf{^{\beta}_{\alpha}R} = \mathbf{^{\beta}_{\mathit{o}}R} \mathbf{^{\alpha}_{\mathit{o}}R}^{-1},
 \label{eq:rule11}
\end{equation}
where $\mathbf{^{\beta}_{\alpha}R} \in \mathcal{SO} (3)$ denotes the relative rotation of $\textit{Link-2}$ in frame $\alpha$. Then we can 
define an operator as $(\cdot)^\ddagger$ that transfers a rotation  $\mathbf{R}\in \mathcal{SO} (3)$ into a quaternion $\mathfrak{q} \in \reals^4$ first and then to angle vector $\boldsymbol{\theta} \in \reals^3$ to avoid the gimbal lock. We can then
transfer (\ref{eq:rule11}) to get the relative rotation of \textit{Link-2} relative to frame $\alpha$ as
\begin{equation}
    \boldsymbol{\theta}_{*}^\top = \mathbf{^{\beta}_{\alpha}R}^\ddagger.
    \label{eq::rule1_2}
\end{equation}
Then we define a new variable as $\mathbf{q}_* = [\mathbf{p}_b^\top, \boldsymbol{\theta}_{*}^\top]^\top \in \reals^{6 \times 1}$ for the non-rigid part in which the torque applied by spring and damper directly depends on $\boldsymbol{\theta}_*$ instead of $\boldsymbol{\theta}$. 

The dynamics of the combined module on the client satellite with initial rotation and  translation can be modeled using the Euler-Lagrange equation in task space as 
\begin{equation}
 \begin{aligned}
 \resizebox{0.90\hsize}{!}{$
 \boldsymbol{\mathcal{T}}_{L1} = \mathbf{M}_b (\mathbf{q}_b)\mathbf{\ddot{q}}_b + \mathbf{C}_b (\mathbf{q}_b, \mathbf{\dot{q}}_b)\mathbf{\dot{q}}_b + \mathbf{D} (\mathbf{\dot{q}}_*) \mathbf{\dot{q}}_* 
 + \mathbf{K}(\mathbf{q}_*)\mathbf{q}_* + \mathbf{F}_f, \label{eq:L1} $}
 \end{aligned} 
\end{equation}
where $\boldsymbol{\mathcal{T}}_{L1} = \begin{bmatrix}
 \mathbf{f}_{L1},\boldsymbol{\tau}_{L1}
\end{bmatrix}^\top \in \reals^6$ is comprised of both the force $\mathbf{f}_{L1} = \begin{bmatrix}
 f_x, f_y, f_z
\end{bmatrix}^\top \in \reals^3$ and torque $\boldsymbol{\tau}_{L1} = \begin{bmatrix}
 \tau_x, \tau_y, \tau_z
\end{bmatrix}^\top \in \reals^3$ relative to the estimated point ${P_1}$. The inertia matrix of the \textit{Link-1} is
\begin{equation*}
 \begin{aligned}
 \mathbf{M}_b (\mathbf{q}_b) = \begin{bmatrix}
 m_b\mathbf{I} & -m_b (\mathbf{^{\alpha}_{\mathit{o}}R}\mathbf{d}_b)^\times\\
 -m_b (\mathbf{^{\alpha}_{\mathit{o}}R}\mathbf{d}_b)^\times & ^{\alpha}_{\mathit{o}}\mathbf{R}\boldsymbol{I}_{d_b} \mathbf{^{\alpha}_{\mathit{o}}R}^\top 
 \end{bmatrix},
 \end{aligned}
\end{equation*}
where $\boldsymbol{\mathrm{I}} \in \mathbb{R}^{3 \times 3}$ is identity matrix, $\mathbf{d}_b \in \reals^3$ and $\boldsymbol{I}_d$ are respectively the position vector of the measurement point concerning the center-of-mass and moment-of-inertia of the client in frame $\alpha$. Note that $(\cdot)^\times$ denotes pseudo cross-product operator for $\reals^3 \rightarrow \boldsymbol{so}(3)$. Moment of inertia is obtained using the parallel axis theorem\cite{khalil2002nonlinear}
\begin{equation*}
 \begin{aligned}
 \boldsymbol{I}_{d_b} = \boldsymbol{I}_{cm}^{L1} + m_b ( (\boldsymbol{d}_b^\top\boldsymbol{d}_b)\mathbf{I} - \boldsymbol{d}_b\boldsymbol{d}_b^\top), 
\end{aligned} 
\end{equation*}
where $\boldsymbol{I}_{cm}^{L1}$ is the moment-of-inertia about the center-of-mass of \textit{Link-1}. The centripetal and Coriolis matrix is
\begin{equation*}
\resizebox{0.98\hsize}{!}{$
 \mathbf{C}_b (\mathbf{q}_b, \mathbf{\dot{q}}_b)=\begin{bmatrix}
 \mathbf{0}_{3 \times 3} & -m_b \boldsymbol{\omega}^{\times}\left (^{\alpha}_{\mathit{o}}\mathbf{R} \mathbf{d}_{b}\right)^{\times} \\
-m_b \boldsymbol{\omega}^{\times}\left (\mathbf{^{\alpha}_{\mathit{o}}\mathbf{R} \mathbf{d}_b}\right)^{\times} & \boldsymbol{\omega}^{\times}\mathbf{^{\alpha}_{\mathit{o}}R} \boldsymbol{I}_{d_b}\mathbf{^{\alpha}_{\mathit{o}}R}^\top-m_b\left (\left (\mathbf{^{\alpha}_{\mathit{o}}R} \mathbf{d}_b\right)^{\times} {\mathbf{v}_b}\right)^{\times}
 \end{bmatrix}$},
\end{equation*}
where $\mathbf{v}_b = \mathbf{\dot{p}}_b \in \reals^3$ denote the linear velocity of the \textit{Link-1}. The servicer satellite will not have knowledge of the parameters $m_b$, $\boldsymbol{I}_{d_b}$, and $\mathbf{d}_b$ that define the dynamics (\ref{eq:L1}) of \textit{Link-1}. The damping matrix $\mathbf{D}(\mathbf{\dot{q}}_b)$ is
\begin{align}
 \mathbf{D} = \begin{bmatrix}
 \mathbf{0}_{3 \times 3} & \mathbf{0}_{3 \times 3}\\
 \mathbf{0}_{3 \times 3} & \operatorname{diag}(\boldsymbol{\xi})(\boldsymbol{\dot\theta}_*\oslash\boldsymbol{\dot \theta}_b)^\times 
 \end{bmatrix},
\end{align}
where $\oslash$ denotes element-wise division. $\boldsymbol{\xi} = \begin{bmatrix}
 \xi_x & \xi_y & \xi_z
\end{bmatrix}^\top \in \reals^3$ is the damping ratio separated along each rotation axis. And the spring matrix $\boldsymbol{K}(\boldsymbol{q})$ is
\begin{align}
 \mathbf{K} = \begin{bmatrix}
 \mathbf{0}_{3 \times 3} & \mathbf{0}_{3 \times 3}\\
 \mathbf{0}_{3 \times 3} & \operatorname{diag}(\mathbf{k})(\boldsymbol{\theta}_*\oslash\boldsymbol{\theta}_b)^\times
 \end{bmatrix},
\end{align}
$\mathbf{k} = \begin{bmatrix}
 k_x & k_y & k_z
\end{bmatrix}^\top \in \reals^3$ is the spring value separated along each rotation axis. And we assume there is a constant friction wrench $\mathbf{F}_f \in \reals^6$ exists on the hinge of the satellite when $\boldsymbol{\dot \theta}_* \neq \mathbf{0}$. Note that $\lambda \in (0,1)$ is a constant value that can separate the wrench applied by the hinge to \textit{Link-1} relative to estimated point $P_1$. This constant value can be canceled on the composite dynamics analysis in Section~\ref{grasp_subsec}
\subsection{Dynamics of Link-2}
\label{sec::sub-D}
Next, we analyze the non-rigid dynamics of a solar panel connected to the base through the hinge. The dynamics of hybrid stiffness system as
\begin{align}
\resizebox{0.90\hsize}{!}{$
 \boldsymbol{\mathcal{T}}_{L2} = \mathbf{M}_s (\mathbf{q}_s)\mathbf{\ddot{q}}_s + \mathbf{C}_s (\mathbf{q}_s, \mathbf{\dot{q}}_s)\mathbf{\dot{q}}_s + \mathbf{D} (\mathbf{\dot{q}}_*) \mathbf{\dot{q}}_* 
 + \mathbf{K} (\mathbf{q}_*)\mathbf{q}_* + \mathbf{F}_f, \label{eq:L2} $} 
\end{align}
where $ \boldsymbol{\mathcal{T}}_{L2} = \begin{bmatrix}
 \mathbf{f}_{L2},\boldsymbol{\tau}_{L2}
\end{bmatrix}^\top \in \reals^{6 \times 1}$ is the total wrench applied to the \textit{Link-2} relative the estimated point ${P_2}$. To keep the state variable uniform in the whole dynamics system to simplify the controller design, we can use $\mathbf{q} = \mathbf{q}_b$ to replace $\mathbf{q}_s, \mathbf{q_*}$ and reorganize (\ref{eq:L2}) as
\begin{align}
\resizebox{0.90\hsize}{!}{$
 \boldsymbol{\mathcal{T}}_{L2} = \mathbf{M}_s (\mathbf{q})\mathbf{\ddot{q}} + \mathbf{C}_s (\mathbf{q}, \mathbf{\dot{q}})\mathbf{\dot{q}} + \mathbf{D} (\mathbf{\dot{q}}) \mathbf{\dot{q}} 
 + \mathbf{K} (\mathbf{q})\mathbf{q} + \mathbf{F}_f, \label{eq:L2_new} $} 
\end{align}
where $\mathbf{M}_s$ is \textit{Link-2}'s inertia matrix as
\begin{equation*}
\resizebox{0.95\hsize}{!}{$
 \mathbf{M}_s (\mathbf{q}) 
 = \begin{bmatrix}
 m_s\mathbf{I} - m_s (\mathbf{^{\beta}_{\mathit{o}}R}\mathbf{d}_s)^\times(\boldsymbol{\ddot\theta}_*\oslash\mathbf{\ddot p}_b)^\times & -m_s (\mathbf{^{\beta}_{\mathit{o}}R}\mathbf{d}_s)^\times \\
 -m_s (\mathbf{^{\beta}_{\mathit{o}}R}\mathbf{d}_s)^\times & \mathbf{^{\beta}_{\mathit{o}}R}\boldsymbol{I_{d_s}} \mathbf{^{\beta}_{\mathit{o}}R}^\top - (\boldsymbol{\ddot\theta}_*\oslash\boldsymbol{\ddot \theta}_b)^\times 
 \end{bmatrix}$},
\end{equation*}
where $\boldsymbol{I}_{d_s}$ is the moment of inertia of the \textit{Link-2} as
\begin{align*}
 \boldsymbol{I}_{d_s} = \boldsymbol{I}_{rot} + \boldsymbol{I}_{cm}^{L2} + m_s ( (\boldsymbol{d}_{s}^\top \mathbf{d}_{s})\mathbf{I} - \mathbf{d}_{s} \mathbf{d}_{s}^\top ), \label{eq22}
\end{align*}
where $\boldsymbol{I}_{rot} \in \mathbb{R}^{3 \times 3}$ is the inertia matrix of the rotor. The centripetal and Coriolis matrix is
\begin{equation*}
\resizebox{1\hsize}{!}{$
 \mathbf{C}_s (\mathbf{q}, \mathbf{\dot{q}})= 
\begin{bmatrix}
 m_s \boldsymbol{\omega}^{\times}\left (\mathbf{^{\beta}_{\mathit{o}}R} \mathbf{d}_s\right)^{\times}(\boldsymbol{\dot\theta}_*\oslash\mathbf{\dot p}_b)^\times & -m_s \boldsymbol{\omega}^{\times}\left (\mathbf{^{\beta}_{\mathit{o}}R} \mathbf{d}_s\right)^{\times}\\
 -m_s \boldsymbol{\omega}^{\times}\left (\mathbf{^{\beta}_{\mathit{o}}R} \mathbf{d}_s\right)^{\times} & \boldsymbol{\omega}^{\times}\mathbf{^{\beta}_{\mathit{o}}R} \boldsymbol{I}_{d_s}\mathbf{^{\beta}_{\mathit{o}}R}^\top-m_s\left (\left (\mathbf{^{\beta}_{\mathit{o}}R} \mathbf{d}_s\right)^{\times} {\mathbf{v}_b}\right)^{\times}-(\boldsymbol{\dot\theta}_*\oslash\boldsymbol{\dot \theta}_b)^\times
\end{bmatrix}$},
\end{equation*}
 $\mathbf{D}, \mathbf{K}$ and $\mathbf{F}_f$ are the same as in Section~\ref{dynamics-link1}.

\begin{figure}[!t]
\centering
\includegraphics[width=2.3in]{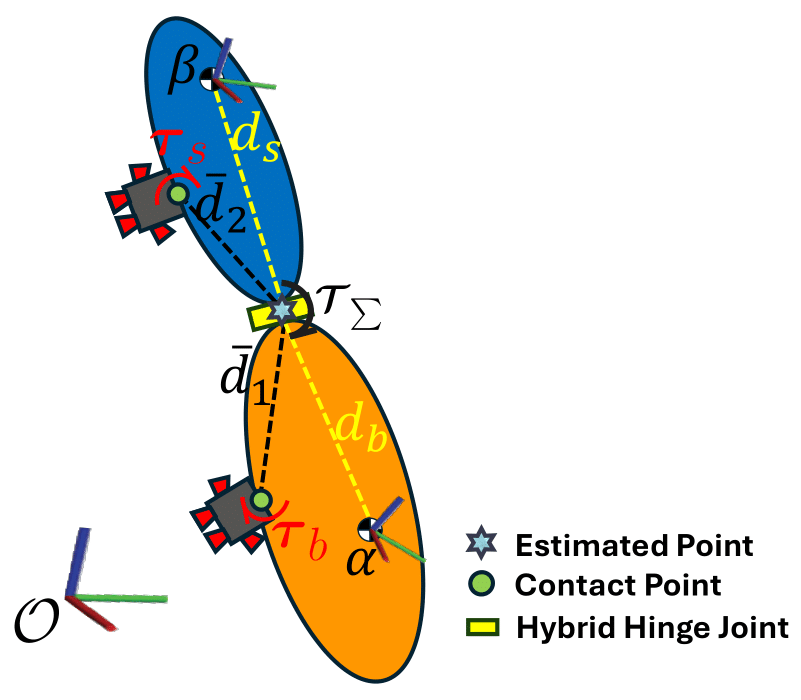}
\caption{Simplified diagram of the detumbling process. The base and a fully functional solar panel connected to the base by a fixed joint together as a combined ``module" as \textit{Link-1}(orange ellipse); another nonfunctional solar panel as \textit{Link-2}(blue ellipse). The hybrid stiffness system can be treated as a specific revolute joint(yellow rectangle).}
\label{fig_3}
\end{figure}

\subsection{Grasping Dynamics of Manipulator System}
\label{grasp_subsec}
Next, we model the wrench $\boldsymbol{\mathcal{T}_b}$ and $\boldsymbol{\mathcal{T}_s}$ applied to the $\textit{Link-1}$ and $\textit{Link-2}$, respectively. 
The total wrench applied on the $\textit{Link-1}$ and $\textit{Link-2}$ can be represent as
\begin{equation}
 \begin{aligned}
\boldsymbol{\mathcal{T}}_{L1} &= \boldsymbol{G}_b\left (\mathbf{q}_b, \mathbf{\bar d}_1\right) \boldsymbol{\mathcal{T}}_b,\\
\boldsymbol{\mathcal{T}}_{L2} &= \boldsymbol{G}_s\left (\mathbf{q}_s, \mathbf{\bar d}_2\right) \boldsymbol{\mathcal{T}}_s,
\end{aligned}
\label{eq::grasping-dynamics}
\end{equation}
where the matrix $\boldsymbol{G}_{b,s}$ maps the forces and torques produced by the space tug into wrenches on the $\textit{Link-1}$ and $\textit{Link-2}$ 
\begin{equation*}
 \begin{aligned}
 \boldsymbol{G}_{b,s}\left (\boldsymbol{q}, \boldsymbol{\bar d}_i\right) = \begin{bmatrix}
 \boldsymbol{\mathrm{I}} & \mathbf{0}_{3 \times 3}\\
  (\mathbf{^{\alpha, \beta}_{\mathit{o}}R}\mathbf{\bar d}_i)^{\times} & \mathbf{I}
\end{bmatrix}.
\end{aligned}
\end{equation*}
Combined with the dynamics~\eqref{eq:L1} and~\eqref{eq:L2} and grasping dynamics in (\ref{eq::grasping-dynamics}), we obtain
\begin{equation*}
\left\{ 
\resizebox{0.97\hsize}{!}{$
\begin{aligned}
\boldsymbol{G}_b\left (\mathbf{q}_b, \mathbf{\bar d}_1\right)\boldsymbol{\mathcal{T}}_b&=\mathbf{M}_b (\mathbf{q})\mathbf{\ddot{q}} + \mathbf{C}_b (\mathbf{q}, \mathbf{\dot{q}})\mathbf{\dot{q}}+ \lambda(\mathbf{D} (\mathbf{\dot{q}})\mathbf{\dot{q}} 
 + \mathbf{K} (\mathbf{q}) + \mathbf{F}_f) \\
\boldsymbol{G}_s\left (\mathbf{q}_s, \mathbf{\bar d}_2\right) \boldsymbol{\mathcal{T}}_s &=
 \mathbf{M}_s (\mathbf{q})\mathbf{\ddot{q}} + \mathbf{C}_s (\mathbf{q}, \mathbf{\dot{q}})\mathbf{\dot{q}} + \mathbf{D}(\mathbf{\dot{q}})\mathbf{\dot{q}} 
 + \mathbf{K} (\mathbf{q}) + \mathbf{F}_f.
\end{aligned}$}
\right.
\end{equation*}
Then we can get the composite wrench $\boldsymbol{\mathcal{T}}_{\sum}$ applied on the hinge relative to the estimated point
\begin{equation}
\resizebox{0.85\hsize}{!}{$
\begin{aligned}
 \boldsymbol{\mathcal{T}}_{\sum} &= \boldsymbol{\mathcal{T}}_{L1} + \boldsymbol{\mathcal{T}}_{L2}\\
 &=\mathbf{M}_\nu(\mathbf{q})\mathbf{\ddot{q}} + \mathbf{C}_\nu (\mathbf{q}, \mathbf{\dot{q}})\mathbf{\dot{q}} + \mathbf{D}_\nu (\mathbf{\dot{q}})\mathbf{\dot{q}} 
 + \mathbf{K}_\nu (\mathbf{q}) + \mathbf{F}_\nu,
 \end{aligned}$}
 \label{eq:all_wrench}
\end{equation}
where $\mathbf{M}_\nu = \mathbf{M}_b + \mathbf{M}_s$ and $\mathbf{C}_\nu = \mathbf{C}_b + \mathbf{C}_s$ are combined inertia matrix and Coriolis and centripetal matrix, respectively, for both \textit{Link-1} and \textit{Link-2} relative to the estimated point. $\mathbf{D}_\nu = 2\mathbf{D}$, $\mathbf{K}_\nu = 2\mathbf{K}$ and $\mathbf{F}_v = 2\mathbf{F}_f$ based on the Newton's third law of motion\cite{khalil2002nonlinear}. Proof of the positive definiteness of the mass matrix is included in Appendix~\ref{appendixproof1}.

\subsection{Satellite Stabilization Problem}
The client satellite stabilization problem can be stated as follows:

\textit{Problem 1:} Drive the \textit{Link}-1 and \textit{Link}-2 with initial condition $\mathbf{\dot q}_b(0),\mathbf{\dot q}_s(0) \neq \mathbf{0}^{6\times1}$ to zero as $\mathbf{\dot q}_b(t) = \left[\mathbf{v}_b(t), \boldsymbol{\omega}_b(t)\right]^\top \rightarrow \mathbf{0}\in\reals^6$ and $\mathbf{\dot q}_s(t) = \left[\mathbf{v}_s(t), \boldsymbol{\omega}_s(t)\right]^\top \rightarrow \mathbf{0}\in\reals^6$ as $t \rightarrow \infty$.

According to (\ref{eq:rule11}), when the $\boldsymbol{q} \rightarrow \mathbf{0}$, we obtain that the hinge angle converges to zero meaning that the solar panel is aligned with the client satellite body. Here we introduce a composite error $\boldsymbol{s}$ that combines both linear and angular velocity error to help us make sure the motion stabilization: 
\begin{equation}
\label{eq:error}
 \begin{aligned}
 \boldsymbol{s} = \begin{bmatrix}
 \boldsymbol{\epsilon} \\
 \boldsymbol{o}
\end{bmatrix} = \begin{bmatrix}
 \dot{\boldsymbol{e}}_p + \gamma \boldsymbol{e}_p (t)\\
 \boldsymbol{e}_{w} + \mathcal{S}_{e} 
\end{bmatrix},\\
\end{aligned}
\end{equation}
where $\mathbf{e}_p (t) = \mathbf{p}_d(t) - \mathbf{p}(t)$ is the position error, $\gamma \in (0,1)$ is a constant positive number. $\boldsymbol{\dot{e}}_p (t)$ is the linear velocity error; $\boldsymbol{e}_{w} = \boldsymbol{\omega}_d (t) - \boldsymbol{\omega} (t)$ denote the angular velocity error on estimated point. For 
\begin{equation*}
 \begin{aligned}
 \mathcal{S}_{e} &= \tfrac{1}{2}\gamma\mathbf{^{\alpha}_{\mathit{o}}R}_d \left (\mathbf{R}_e + \mathbf{R}_e^\top\right)^{\vee},
 \end{aligned}
\end{equation*}
where $\mathcal{S}_e \in \reals^6$ is the part of the rotation dynamics error in the skew-symmetric form. And $\mathbf{R}_e = \mathbf{^{\alpha}_{\mathit{o}}R}_d^\top\mathbf{^{\alpha}_{\mathit{o}}R}$ is the rotation error. Note that our goal for $\mathbf{R}_e \rightarrow \mathbf{I}$, $\boldsymbol{\dot e}_p \rightarrow \mathbf{0}$, $\boldsymbol{e}_\omega \rightarrow \mathbf{0}$ as $t \rightarrow \infty$.

\textit{Problem 1} is thus equivalent to ensuring the convergence of the composited error $\boldsymbol{s} \rightarrow \mathbf{0}\in \reals^6$ as $t \rightarrow \infty$.

\section{Decentralized Adaptive Stabilization}
\label{sec::method-go}
In this section, we will design a decentralized adaptive controller for the two space tugs on the servicer satellite to manipulate $\textit{Link-}1$ and $\textit{Link-}2$ to drive the client satellite to rest, i.e., $\boldsymbol{s} \rightarrow \mathbf{0}^{6 \times 1}$ as $t \rightarrow \infty$.  Then, we define a Lyapunov-like function\cite{blanchini2008set} to prove that our controller design makes the system stable.

First, let's introduce the regressor matrix $\boldsymbol{Y}_{\varphi}^{\nu} (\mathbf{q},\mathbf{\dot{q}}, \mathbf{\dot{q}}_d, \mathbf{\ddot{q}}_d)$ which is necessary for the controller adaptation
\begin{equation}
\boldsymbol{Y}_{\varphi}^{\nu} = \left[{\boldsymbol{Y}_\epsilon^{\nu}}^\top, {\boldsymbol{Y}_o^\nu}^\top  \right]^\top,\\
\label{eq:regressor_def} 
\end{equation}
where $\boldsymbol{Y}_\epsilon^\nu, \boldsymbol{Y}_o^\nu \in \reals^{3\times22}$ are sub-regressor matrices. $\boldsymbol{\epsilon},\boldsymbol{o} \in \reals^{3}$ in (\ref{eq:error}) denote recursive terms for angular and linear acceleration relative to the estimated point, respectively.
 
 We decompose the composite dynamics (\ref{eq:all_wrench}) into the regressor matrix $\boldsymbol{Y}_{\varphi}^\nu$ and a parameter vector $ \boldsymbol{\varphi} \in \reals^{22}$ which contain all the unknown physical parameters for the whole client satellite as summarized in Table~\ref{tab:task}. Then, based on (\ref{eq:error}) and (\ref{eq:regressor_def}) and the dynamics model in (\ref{eq:all_wrench}), we can get
\begin{equation}
\begin{aligned}
\boldsymbol{Y}_{\varphi}^\nu\boldsymbol{\tilde{\varphi}} &= \sum_{i=1}^2 \mathit{c} (\mathbf{\tilde{M}}_\nu \mathbf{\ddot{{q}}}_{d} + \mathbf{\tilde{C}}_\nu\mathbf{\dot{q}}_{d} + \mathbf{\tilde D}_\nu \mathbf{\dot{q}}_d 
 + \mathbf{\tilde K}_\nu\mathbf{q}_d+ \mathbf{\tilde F}_\nu)\\
 &= \sum_{i=1}^2 \mathit{c} (\mathbf{\tilde M}_\nu\mathbf{\ddot{q}}_d + \boldsymbol{\tilde \Lambda}\mathbf{\dot{q}}_d + \mathbf{\tilde K}_\nu \mathbf{q}_d + \mathbf{\tilde F}_\nu),
 \label{eq:regressor-phi}
\end{aligned}
\end{equation}
where $\mathit{c}=\tfrac{1}{2}$ divides the control workload evenly among the space tugs, and   
 $\boldsymbol{\tilde{\varphi}}= \boldsymbol{\hat{\varphi}} - \boldsymbol{\varphi}$ denotes the error of parameters estimation related to the physical properties of the \textit{Link-1} and \textit{Link-2}, scaled by $\mathit{c}$, and $\mathbf{\tilde{M}_\nu (\mathbf{q})} = \mathbf{\hat{M}}_\nu (\mathbf{q}) - \mathbf{M}_\nu (\mathbf{q})$ is the inertia error of the stiffness system, $\mathbf{\tilde{C}}_\nu (\mathbf{\dot{q}}, \mathbf{q}) = \mathbf{\hat{C}}_\nu (\mathbf{\dot{q}}, \mathbf{q}) - \mathbf{C}_\nu (\mathbf{\dot{q}}, \mathbf{q})$ is the Coriolis and centripetal error and $\mathbf{\tilde F}_\nu = \mathbf{\hat F}_\nu - \mathbf{F}_\nu$ is the friction error. $\boldsymbol{\tilde \Lambda} = \mathbf{\tilde C}_\nu + \mathbf{\tilde D}_\nu$ is the combined matrix to simplify the equation.

To provide detumbling despite parametric uncertainty, we use the proportional-derivative control law with adaptive feedback linearization
\begin{equation}
\begin{aligned}
 \boldsymbol{\mathcal{\hat T}} = \boldsymbol{Y}^\nu_{\varphi} \boldsymbol{\hat{\varphi}} - \boldsymbol{\mathcal{K}}_{\scriptstyle\text{PD}}\boldsymbol{s},
 \label{eq: pdlaw}
\end{aligned}
\end{equation}
where the term $\boldsymbol{Y}^\nu_{\varphi} \boldsymbol{\hat{\varphi}}$ feedback linearizes the nonlinear parts of the tumbling satellite dynamics~\eqref{eq:L1}-\eqref{eq:L2} using the estimated parameters $\boldsymbol{\hat{\varphi}}$. The term $-\boldsymbol{\mathcal{K}}_{\scriptstyle\text{PD}}\boldsymbol{s}$ provide proportional-derivative control. 
Note that the composite error~\eqref{eq:error} contains both position and rotation errors as well as velocity and angular velocity errors, producing a proportional-derivative controller. 

Now let's introduce another set of regressors $\boldsymbol{Y}_d^\nu (\boldsymbol{\mathcal{\hat{T}}}_{\nu}, \mathbf{q})$ 
\begin{equation}
        \boldsymbol{Y}_d^\nu (\boldsymbol{\mathcal{\hat{T}}}_\nu, \mathbf{q}) = \left[\boldsymbol{Y}_d^b (\boldsymbol{\mathcal{\hat{T}}}_b, \mathbf{q}), \boldsymbol{Y}_d^s (\boldsymbol{\mathcal{\hat{T}}}_s, \mathbf{q})\right],
\end{equation}
where $\boldsymbol{Y}_d^b, \boldsymbol{Y}_d^s \in \reals^{6\times3}$ are sub-regressor matrix concerning each grasping point as 
\begin{equation*}
    \begin{aligned}
    \boldsymbol{Y}_d^b (\boldsymbol{\mathcal{\hat{T}}}_b, \mathbf{q})  &= \left[\mathbf{f}_{L1}^{\times}\mathbf{^{\alpha}_{\mathit{o}}R},\boldsymbol{\tau}_{L1}^{\times}\mathbf{^{\alpha}_{\mathit{o}}R}\right]^\top\\
    \boldsymbol{Y}_d^s (\boldsymbol{\mathcal{\hat{T}}}_s, \mathbf{q})  &= \left[\mathbf{f}_{L1}^{\times}\mathbf{^{\beta}_{\mathit{o}}R},\boldsymbol{\tau}_{L1}^{\times}\mathbf{^{\beta}_{\mathit{o}}R}\right]^\top,
    \end{aligned}
\end{equation*}
then we can get
\begin{equation}
\begin{aligned}
\boldsymbol{Y}_d^\nu (\boldsymbol{\mathcal{\hat{T}}}_{\nu}, \mathbf{q})\mathbf{\hat{d}} &= -(\boldsymbol{\tilde{G}}_b \boldsymbol{\mathcal{\hat{T}}}_{b} + \boldsymbol{\tilde{G}}_s \boldsymbol{\mathcal{\hat{T}}}_{s}).
\label{eq:regressor-g}
\end{aligned}
\end{equation}
Thus, we can get the adaptation laws as
\begin{subequations}
\label{eq:adaptation}
\begin{align}
 \boldsymbol{\dot{\hat{\varphi}}} & =-\boldsymbol{\Gamma}_{\varphi} \boldsymbol{Y}_{\varphi}^{\nu}\left (\mathbf{q}, \mathbf{\dot{q}}, \mathbf{\dot{q}}_{d}, \ddot{\mathbf{q}}_{d}\right)^\top\boldsymbol{s}, \\
 \mathbf{\dot{\hat{d}}} & =-\boldsymbol{\Gamma}_d \boldsymbol{Y}_d^{\nu}\left (\boldsymbol{\mathcal{\hat{T}}}_{b,s}, \mathbf{q}\right)^\top\boldsymbol{s},
\end{align}
\end{subequations}
where $\boldsymbol{\Gamma}_{\varphi}  \in \reals^{22\times22}$ and $\boldsymbol{\Gamma}_{d} \in \reals^{6\times6}$ are symmetric positive definite matrices related to the adaptive gain, usually diagonal.

\textit{Theorem 1:} The composite error (\ref{eq:error}) of the composite dynamics for \textit{Link-1} and \textit{Link-2} relative to the estimated point in (\ref{eq:all_wrench}) converges to zero $\boldsymbol{s} \rightarrow \mathbf{0} \in \mathbb{R}^6$ as $t \rightarrow \infty$ under the adaptive controller (\ref{eq: pdlaw}) with the adaption law (\ref{eq:adaptation}).

\begin{proof} 
Consider the Lyapunov-like function 
\begin{equation}
 V (t) = \frac{1}{2}\left[\boldsymbol{s}^\top \mathbf{M}_\nu \boldsymbol{s} + \tilde{\boldsymbol{\varphi}}^\top \boldsymbol{\Gamma}_{\varphi} \tilde{\boldsymbol{\varphi}} +\mathbf{\tilde{d}}^\top \boldsymbol{\Gamma}_d \mathbf{\tilde{d}} \right] \label{eq9}
\end{equation}
where $\mathbf{\tilde{d}}$ are the grasping position error which are related the difference between the actual position of $i$-th space tug $\mathbf{\bar d}_i$ concerning the measurement point and the space tug's estimated $\mathbf{\hat{\bar d}}_i$. 
Then, by taking the derivative, we obtain
\begin{equation}
\begin{aligned}
\dot V (t) &=\boldsymbol{s}^\top \left[ \boldsymbol{\mathcal{T}}_{\sum} - \mathbf{M}_\nu \mathbf{\ddot{q}} - \boldsymbol{\Lambda}\mathbf{\dot{q}} - \mathbf{K}_\nu \mathbf{q} - \mathbf{F}_\nu \right]\\
 &+ \tilde{\boldsymbol{\varphi}}^\top \boldsymbol{\Gamma}_{\varphi} \dot{\hat{\boldsymbol{\varphi}}}+\mathbf{\tilde{d}}^\top \boldsymbol{\Gamma}_d \mathbf{\dot{\hat{d}}}, \label{eq10} 
\end{aligned}
\end{equation}
where $\dot{\hat{\boldsymbol{\varphi}}} = \dot{\tilde{\boldsymbol{\varphi}}}$ and $\mathbf{\dot{\hat{d}}} = \mathbf{\dot{\tilde{d}}}$ since the parameter $\boldsymbol{\varphi}$ and $\boldsymbol{d}$ are constant. Then we use the properties of skew-symmetry \cite{khalil2002nonlinear} to eliminate the term $ \boldsymbol{s}^\top (\tfrac{1}{2}\dot{\mathbf{M}}_{\nu} - \boldsymbol{\Lambda}) \boldsymbol{s} = 0$ and $\mathbf{\dot{q}}_d = \boldsymbol{s} - \mathbf{\dot{q}}$.
Let's define the control-law as
\begin{align}
 \boldsymbol{\mathcal{T}} = \boldsymbol{\hat{M}}_\nu \mathbf{\ddot{q}}_d + \boldsymbol{\hat{\Lambda}}\mathbf{\dot{q}}_d + \mathbf{\hat{K}}_\nu \mathbf{q}_d + \mathbf{\hat{F}}_\nu. \label{eq11}
\end{align}
Then we substitute \eqref{eq11} to \eqref{eq10} 
 and then rewrite $\dot{V} (t)$ using \eqref{eq:regressor-phi} and \eqref{eq:regressor-g} 
 and then substitute adaptation-laws~\eqref{eq:adaptation} to obtain
\begin{equation}
\resizebox{0.89\hsize}{!}{$
 \begin{aligned}
 \begin{split}
 \dot{V} (t) &= -\boldsymbol{s}^\top\boldsymbol{\mathcal{K}}_{{\scriptstyle\text{PD}}}\boldsymbol{s} + \boldsymbol{\tilde{\varphi}}^\top (\boldsymbol{Y}_{\varphi}^\nu \boldsymbol{\tilde{\varphi}} 
 + \boldsymbol{\Gamma}_{\varphi}\dot{\hat{\boldsymbol{\varphi}}})+ \mathbf{\tilde d}^\top (\boldsymbol{Y}_d^\nu \mathbf{\tilde d} + \boldsymbol{\Gamma}_d \mathbf{\dot{\hat{d}}})\\
 &= -\left (\boldsymbol{s}^\top \boldsymbol{K}_{\scriptstyle\text{PD}} \boldsymbol{s} \right) \leq 0.\label{eq19} 
 \end{split}
 \end{aligned}$}
\end{equation}
Thus, we get $V (t)>0$ and $\dot{V} (t) \leq 0$, and therefore the system is stable in the sense of Lyapunov.
\end{proof}

\section{Simulation Results}
\label{sec:experiments}

In this section, we present simulation results for detumbling task. We consider two space tugs as our manipulator system to detumble a client satellite in the zero-gravity environment. We used the MuJoCo\cite{todorov2012mujoco} simulation platform and modeled the two space tugs and client satellite operating in a zero-gravity environment as shown in Fig. \ref{fig_4}. Table~\ref{tab:task} summarizes the parameters used in this simulation. Note that we assume all parameters in Table~\ref{tab:task} are unknown to the controller.
\begin{table}[]
\centering
 \begin{tabular}{ccc}
 \toprule 
  Definition & Parameter & Value\\
 \midrule
  base & $m_b$ & 40 kg\\
  & $I_{x x}$ & $2.5667$ kg$\cdot$m$^2$\\
  & $I_{y y}$ & $2.5667$ kg$\cdot$m$^2$\\
  & $I_{z z}$ & $1.6667$ kg$\cdot$m$^2$\\
  solar panel-1 & $m_{s1}$ & 10 kg\\
  & $I_{x x}$ & $0.614$ kg$\cdot$m$^2$\\
  & $I_{y y}$ & $2.2771$ kg$\cdot$m$^2$\\
  & $I_{z z}$ & $2.4833$ kg$\cdot$m$^2$\\
  solar panel-2 & $m_{s2}$ & 10 kg\\
  & $I_{x x}$ & $1.8104$ kg$\cdot$m$^2$\\
  & $I_{y y}$ & $3.4771$ kg$\cdot$m$^2$\\
  & $I_{z z}$ & $3.6833$ kg$\cdot$m$^2$\\
  hybrid hinge & $J_{x x}$ & $6$ kg$\cdot$m$^2$\\
  & $J_{y y}$ & $4$ kg$\cdot$m$^2$\\
  & $\mathbf{\xi}_x$ & $0.4$\\
  & $\mathbf{\xi}_y$ & $0.2$\\
  & $\mathbf{\kappa}_x$ & $0.5$\\
  & $\mathbf{\kappa}_y$ & $0.25$\\
  position & $\mathbf{d}_b$ & $[-1.2,0.2,-0.1]$ m\\
  & $\mathbf{d}_s$ & $[0.74,0.0.1,-0.2]$ m\\
  & $\mathbf{\bar d}_1$ & $[0.36,-0.13,-0.44]$ m\\
  & $\mathbf{\bar d}_2$ & $[-1.0,-0.25,-0.3]$ m\\
 \bottomrule
 \end{tabular}
 \caption {Simulation parameters for detumbling task.}
 \label{tab:task}
\end{table}

\begin{figure*}[!t]
\centering
\includegraphics[width=6.5in]{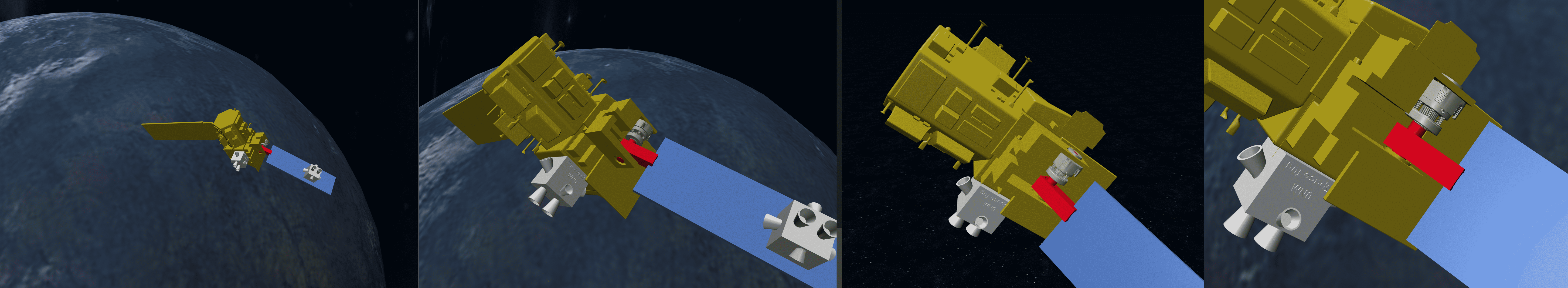}
\caption{Two Space tugs attaching on different locations of a satellite\protect\footnotemark[1] to apply wrenches on the  \textit{Link-1} (yellow part) and \textit{Link-2} (blue part) in zero-gravity simulation environment. We import the hybrid hinge system from real and integrate the friction, spring and damper properties on the rotor.  Supplemental video:
 \href{http://tiny.cc/stabglsfinal}{http://tiny.cc/stabglsfinal}}
\label{fig_4}
\end{figure*}
\footnotetext[1]{ICESat-2, developed by \href{https://icesat-2.gsfc.nasa.gov/}{NASA.}}

We consider the post-capture phase in this work. Thus, we assume both space tugs are firmly attached the satellite at fixed points. We set up the estimated points $P_1$ and $P_2$ in the hinge. For the desired linear velocity $\mathbf{v}_d = \left[ 
v_{x_d}, v_{y_d},v_{z_d} \right]^\top = \mathbf{0}\in \reals^{3}$ and desired angular velocity $\boldsymbol{\omega}_d = \left[ 
\omega_{x_d}, \omega_{y_d},\omega_{z_d} \right]^\top = \mathbf{0}\in \reals^{3}$. 

Considering the \textit{Link-1} and \textit{Link-2} as different parts of the client satellite with different initial angular rates
\begin{equation*}
 \boldsymbol{\omega}_{L1} (0) = \begin{bmatrix}
 0.7\\
 0.8\\
 -1.0
 \end{bmatrix} \text{rad/s},\ 
  \boldsymbol{\omega}_{L2} (0) = \begin{bmatrix}
 1.0\\
 0.5\\
 -1.7
 \end{bmatrix} \text{rad/s}.
\end{equation*} 

The initial linear rate of the satellite relative to the estimated point as 
\begin{equation*}
 \mathbf{v} (0) = \begin{bmatrix}
 -1.0\\
 0.9\\
 0.8
 \end{bmatrix} \text{m/s}.
\end{equation*}

\begin{figure}[htp]
\centering
\captionsetup[sub]{font=scriptsize,labelfont={sf},oneside,margin={0.75cm,0cm}}
\begin{subfigure}{3in}
 \includegraphics[width=\textwidth]{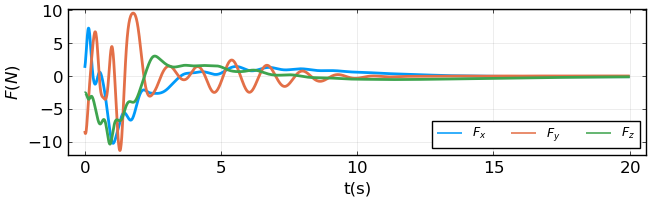}
 \caption{}
 \label{fig5_a}
\end{subfigure}
\hfill
\begin{subfigure}{3in}
 \includegraphics[width=\textwidth]{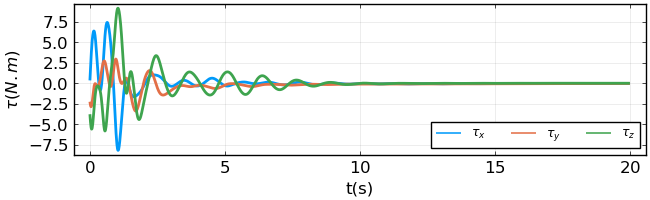}
 \caption{}
 \label{fig5_b}
\end{subfigure}
\caption{Force(a) and Torque(b) applied by space tug-1.}
\label{figure_5}
\end{figure}

\begin{figure}
\centering
\captionsetup[sub]{font=scriptsize,labelfont={sf},oneside,margin={0.65cm,0cm}}
\begin{subfigure}{3in}
 \includegraphics[width=\textwidth]{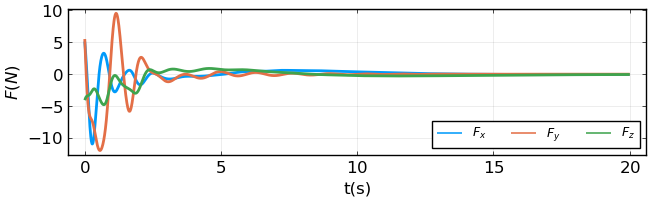}
 \caption{}
 \label{fig6_a}
\end{subfigure}
\begin{subfigure}{2.94in}
 \includegraphics[width=\textwidth]{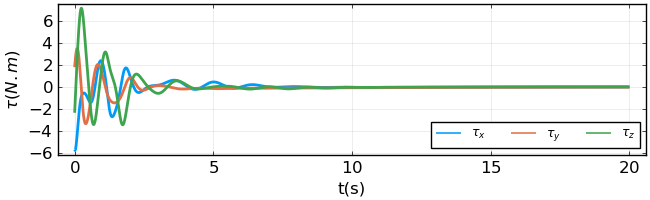}
 \caption{}
 \label{fig6_b}
\end{subfigure}
\caption{Force and Torque applied by space tug-2.}
\label{figure_6}
\end{figure}

\begin{figure}[htb]
\vspace{-0.07cm}
\centering
\captionsetup[sub]{font=scriptsize,labelfont={sf},oneside,margin={0.75cm,0cm}}
\begin{subfigure}{3in}
 \includegraphics[width=\textwidth]{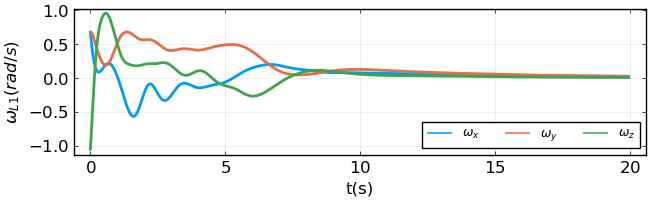}
 \caption{}
 \label{fig7_a}
\end{subfigure}
\hfill
\begin{subfigure}{3in}
 \includegraphics[width=\textwidth]{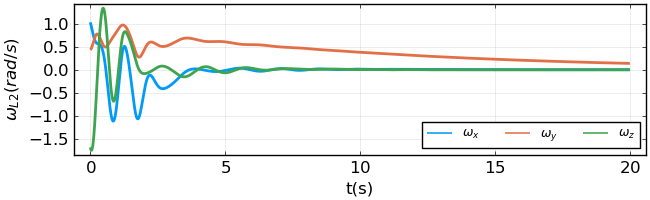}
 \caption{}
 \label{fig7_b}
\end{subfigure}
\hfill
\begin{subfigure}{3in}
 \includegraphics[width=\textwidth]{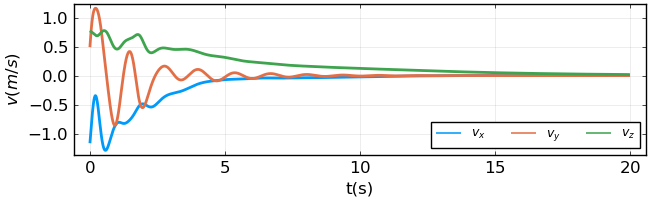}
 \caption{}
 \label{fig7_c}
\end{subfigure}
\caption{Angular velocity of \textit{Link-1} (a) and \textit{Link-2} (b). Linear velocity of the satellite concerning the estimated point (c).}
\label{figure_7}
\end{figure}

Fig. \ref{figure_5} shows the total wrench applied by the space tug-1, which holds the base of the client satellite that is part of \textit{Link-1}. Fig. \ref{fig5_a} shows the force change versus time and Fig. \ref{fig5_b} shows the torque change versus time. Both force and torque converge to zero as the client satellite is detumbled.

Fig. \ref{figure_6} shows the total wrench applied by the space tug-2, which holds the solar panel of the client satellite that is part of \textit{Link-2}. Fig. \ref{fig6_a} shows the force change versus time and Fig. \ref{fig6_b} shows the torque change versus time. Both force and torque also converge to zero as the client satellite is detumbled. Thus, the presented adaptive control scheme successfully detumbled the satellite

Fig. \ref{figure_7} empircially demonstrates that the presented adaptive controller (\ref{eq: pdlaw}) and (\ref{eq:adaptation}) solves Problem 1. Fig. \ref{fig7_a} shows the angular velocity versus time of \textit{Link-1} and Fig. \ref{fig7_b} shows the angular velocity versus time of \textit{Link-2}; Fig. \ref{fig7_c} shows the linear velocity of the satellite versus time concerning the estimated point. Both angular and linear velocities converge to zero over the simulation time period. 

Fig. \ref{fig_8} shows the Lyapunov-like function $V (t)$ versus time for the simulation results. As Fig. \ref{fig_8} shows, the Lyapunov-like function is positive $V (t) > 0$ and decreasing $\dot{V} (t) \leq 0$ as time increases. This empirically verifies that our system is asymptotically stable.
\begin{figure}[!t]
\vspace{0.5cm}
\centering
\includegraphics[width=3in]{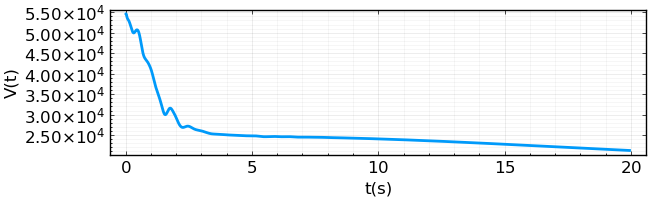}
\caption{Lyapunov-like function. }
\label{fig_8}
\end{figure}

Fig. \ref{figure_9} shows the parameter estimation of the whole dynamics system. Fig. \ref{fig_9a} shows the parameter estimates for the physical properties of the tumbling satellite including \textit{Link-1} and \textit{Link-2}; Fig. \ref{fig_9b} shows the all position estimate for both satellite and grasping dynamics. Based on the results, we can see that all signals are bounded and the final of values are close to the actual values.

Fig. \ref{fig11_a} and Fig. \ref{fig11_b} show the linear and angular velocity of the space tug-1; Fig. \ref{fig10_a} and Fig. \ref{fig10_b} show the linear and angular velocity of the space tug-2;  As Fig. \ref{figure_11} and Fig. \ref{figure_10} show, with the decreasing of the satellite's motion, both the motion of the two space tugs will also decrease. 
\vspace{-0.83cm}
\begin{figure}[!htb]
\centering
\captionsetup[sub]{font=scriptsize,labelfont={sf},oneside,margin={0.65cm,0cm}}
\begin{subfigure}{3in}
\vspace{0.125cm}
 \includegraphics[width=\textwidth]{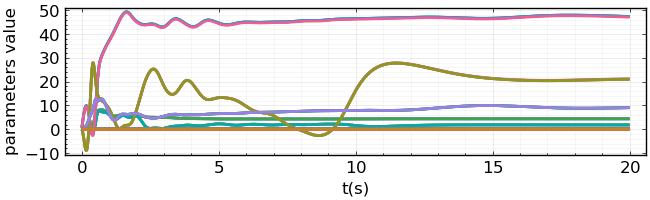}
 \caption{}
 \label{fig_9a}
\end{subfigure}
\hfill
\begin{subfigure}{2.95in}
 \includegraphics[width=\textwidth]{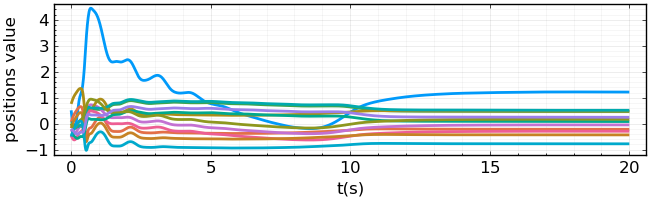}
 \caption{}
 \label{fig_9b}
\end{subfigure}
\caption{Physical parameters(a) and grasping positions(b) estimation.}
\label{figure_9}
\end{figure}\FloatBarrier
\vspace{-0.33cm}
\begin{figure}[!htb]
\vspace{0.222cm}
\centering
\captionsetup[sub]{font=scriptsize,labelfont={sf},oneside,margin={0.75cm,0cm}}
\begin{subfigure}{3in}
 \includegraphics[width=\textwidth]{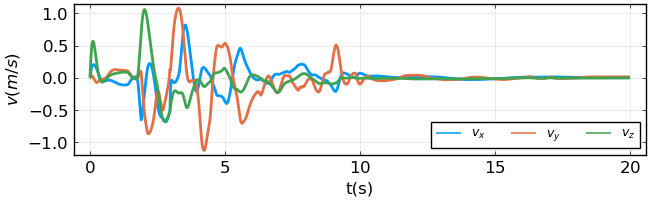}
 \caption{}
 \label{fig11_a}
\end{subfigure}
\hfill
\begin{subfigure}{3in}
 \includegraphics[width=\textwidth]{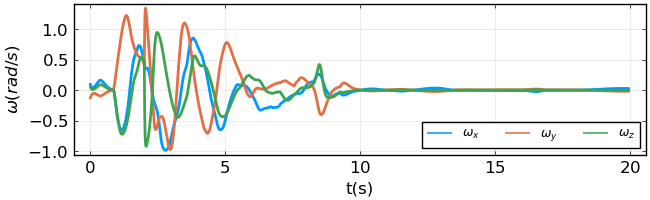}
 \caption{}
 \label{fig11_b}
\end{subfigure}
\caption{Linear (a) and angular (b) velocity of space tug-1.}
\label{figure_11}
\end{figure}
\begin{figure}[!htb]
\centering
\captionsetup[sub]{font=scriptsize,labelfont={sf},oneside,margin={0.75cm,0cm}}
\begin{subfigure}{3in}
 \includegraphics[width=\textwidth]{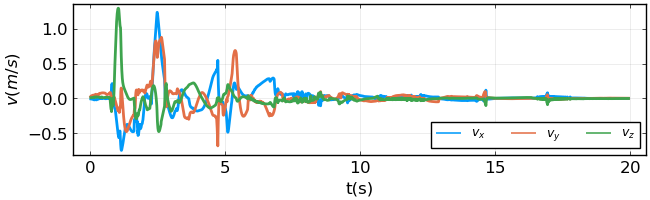}
 \caption{}
 \label{fig10_a}
\end{subfigure}
\hfill
\begin{subfigure}{3in}
 \includegraphics[width=\textwidth]{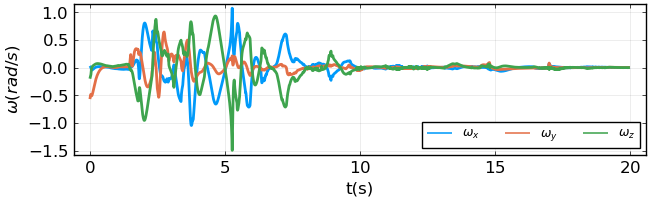}
 \caption{}
 \label{fig10_b}
\end{subfigure}
\caption{Linear (a) and angular (b) velocity of space tug-2.}
\label{figure_10}
\end{figure}
\section{Conclusions}\label{sec:conclusions}
This paper presented a decentralized adaptive detumbling controller for an on-orbit tumbling client satellite. We consider a more complicated malfunction on the tumbling satellite and transfer the whole system into a non-rigid model to analyze the dynamics in translation and rotation together. The controller we designed can drive the tumbling satellite to rest despite uncertainty about the system parameters and the location of the space tugs attached to the client. We presented simulation results demonstrating the efficacy of the cooperative adaptive robotic system.
\section*{Acknowledgement}\label{sec:acknoldege}
We thank Andrew Kwas from Northrop Grumman Corporation and Howie Choset from CMU for their discussions and advice on satellite detumbling. Also, we would like to thank Giovanni Cordova for his assistance with Solidworks modeling.

\bibliographystyle{ieeetr}
\bibliography{main}

\begin{thebibliography}{10}

\bibitem{kunciw1976optimal}
B.~Kunciw and M.~Kaplan, ``Optimal space station detumbling by internal mass motion,'' {\em Automatica}, vol.~12, no.~5, pp.~417--425, 1976.

\bibitem{bak1996autonomous}
T.~Bak, R.~Wisniewski, and M.~Blanke, ``Autonomous attitude determination and control system for the orsted satellite,'' in {\em 1996 IEEE Aerospace Applications Conference. Proceedings}, vol.~2, pp.~173--186, IEEE, 1996.

\bibitem{ward1997}
A.~Ward and S.~J. Gardner, ``Available technologies for small satellites,'' {\em Acta Astronautica}, vol.~40, pp.~613--617, 1997.

\bibitem{kucharski2009}
D.~Kucharski, G.~Kirchner, T.~Otsubo, and F.~Koidl, ``22 years of ajisai spin period determination from standard slr and khz slr data,'' {\em Advances in Space Research}, vol.~44, pp.~621--626, 2009.

\bibitem{bayat2009stabilization}
F.~Bayat, H.~Bolandi, and A.~Jalali, ``A heuristic design method for attitude stabilization of magnetic actuated satellites,'' {\em Acta Astronautica}, vol.~65, pp.~1813--1825, 2009.

\bibitem{ivanov2013hysteresis}
D.~Ivanov, M.~Ovchinnikov, and V.~I. Pen’kov, ``Laboratory study of magnetic properties of hysteresis rods for attitude control systems of minisatellites,'' {\em Journal of Computer and Systems Sciences International}, vol.~52, pp.~145--164, 2013.

\bibitem{votel2012comparison}
R.~Votel and D.~Sinclair, ``Comparison of control moment gyros and reaction wheels for small earth-observing satellites,'' 2012.

\bibitem{shen2018rigid}
Q.~Shen, C.~Yue, C.~H. Goh, B.~Wu, and D.~Wang, ``Rigid-body attitude stabilization with attitude and angular rate constraints,'' {\em Automatica}, vol.~90, pp.~157--163, 2018.

\bibitem{leve2015spacecraft}
F.~A. Leve, B.~J. Hamilton, and M.~A. Peck, {\em Spacecraft momentum control systems}, vol.~1010.
\newblock Springer, 2015.

\bibitem{gang2020detumbling}
C.~Gang, W.~Yuqi, W.~Yifan, J.~Liang, L.~Zhang, and P.~Guangtang, ``Detumbling strategy based on friction control of dual-arm space robot for capturing tumbling target,'' {\em Chinese journal of aeronautics}, vol.~33, no.~3, pp.~1093--1106, 2020.

\bibitem{yan2021trajectory}
L.~Yan, B.~Hu, and S.~Vijayakumar, ``A trajectory optimization method for stabilizing a tumbling target using dual-arm space robots,'' in {\em Intelligent Robotics and Applications: 14th International Conference, ICIRA 2021, Yantai, China, October 22--25, 2021, Proceedings, Part IV 14}, pp.~396--404, Springer, 2021.

\bibitem{chen2023optimal}
H.~Chen, H.~Dai, and X.~Yue, ``Optimal nutation suppressing method for detumbling satellites via a flexible deceleration device,'' {\em Nonlinear Dynamics}, pp.~1--13, 2023.

\bibitem{down2023adaptive}
I.~Down and M.~Majji, ``Adaptive detumbling of uncontrolled planar spacecraft using finite module deposition,'' in {\em AIAA SCITECH 2023 Forum}, p.~0158, 2023.

\bibitem{danielson2024experimental}
C.~Danielson, J.~Kloeppel, and C.~Petersen, ``Experimental validation of constrained spacecraft attitude planning via invariant sets,'' {\em Journal of Guidance, Control, and Dynamics}, vol.~47, no.~1, pp.~61--71, 2024.

\bibitem{aghili2009time}
F.~Aghili, ``Time-optimal detumbling control of spacecraft,'' {\em Journal of guidance, control, and dynamics}, vol.~32, no.~5, pp.~1671--1675, 2009.

\bibitem{luo2014survey}
Y.~Luo, J.~Zhang, and G.~Tang, ``Survey of orbital dynamics and control of space rendezvous,'' {\em Chinese Journal of Aeronautics}, vol.~27, no.~1, pp.~1--11, 2014.

\bibitem{liu2019active}
Y.-Q. Liu, Z.-W. Yu, X.-F. Liu, J.-B. Chen, and G.-P. Cai, ``Active detumbling technology for noncooperative space target with energy dissipation,'' {\em Advances in Space Research}, vol.~63, no.~5, pp.~1813--1823, 2019.

\bibitem{sugai2013detumbling}
F.~Sugai, S.~Abiko, T.~Tsujita, X.~Jiang, and M.~Uchiyama, ``Detumbling an uncontrolled satellite with contactless force by using an eddy current brake,'' in {\em 2013 IEEE/RSJ International Conference on Intelligent Robots and Systems}, pp.~783--788, IEEE, 2013.

\bibitem{taylor2006orbital}
M.~W. Taylor, ``Orbital debris: Technical and legal issues and solutions,'' 2006.

\bibitem{clemen2002onorbit}
C.~Clemen, ``New method for on-orbit-determination of parameters for guidance, navigation and control,'' {\em Acta Astronautica}, vol.~51, pp.~457--465, 2002.

\bibitem{culbertson2021decentralized}
P.~Culbertson, J.-J. Slotine, and M.~Schwager, ``Decentralized adaptive control for collaborative manipulation of rigid bodies,'' {\em IEEE Transactions on Robotics}, vol.~37, no.~6, pp.~1906--1920, 2021.

\bibitem{stolfi2018parametric}
A.~Stolfi, P.~Gasbarri, and M.~Sabatini, ``A parametric analysis of a controlled deployable space manipulator for capturing a non-cooperative flexible satellite,'' {\em Acta Astronautica}, 2018.

\bibitem{nenarokomov2019environmental}
A.~Nenarokomov, O.~Alifanov, I.~Krainova, D.~M. Titov, and A.~V. Morzhukhina, ``Estimation of environmental influence on spacecraft materials radiative properties by inverse problems technique,'' {\em Acta Astronautica}, 2019.

\bibitem{gao2023autonomous}
L.~Gao, G.~Cordova, C.~Danielson, and R.~Fierro, ``Autonomous multi-robot servicing for spacecraft operation extension,'' in {\em 2023 IEEE/RSJ International Conference on Intelligent Robots and Systems (IROS)}, pp.~10729--10735, IEEE, 2023.

\bibitem{khalil2002nonlinear}
K.~H. Khalil, {\em Nonlinear systems}.
\newblock Prentice Hall, 2002.

\bibitem{blanchini2008set}
F.~Blanchini, S.~Miani, {\em et~al.}, {\em Set-theoretic methods in control}, vol.~78.
\newblock Springer, 2008.

\bibitem{todorov2012mujoco}
E.~Todorov, T.~Erez, and Y.~Tassa, ``Mujoco: A physics engine for model-based control,'' in {\em 2012 IEEE/RSJ International Conference on Intelligent Robots and Systems}, pp.~5026--5033, IEEE, 2012.

\bibitem{silvester2000determinants}
J.~R. Silvester, ``Determinants of block matrices,'' {\em The Mathematical Gazette}, vol.~84, no.~501, pp.~460--467, 2000.

\bibitem{boyd2004convex}
S.~P. Boyd and L.~Vandenberghe, {\em Convex optimization}.
\newblock Cambridge university press, 2004.

\end{thebibliography}

\appendix
\section{Appendix}
\subsection{Detail of Regressor Matrix $\boldsymbol{Y}_{\varphi}^\nu$ }
The detail of regressor matrix $\boldsymbol{Y}_{\varphi}^\nu \in \reals^{6 \times 22}$ in (\ref{eq:regressor_def}) showed as

\begin{equation*}
\resizebox{0.98\hsize}{!}{$
    \boldsymbol{Y}_{\varphi}^\nu =\left[
    \begin{array}{c:c:c:c:c:c:c:c}
        \boldsymbol{\epsilon} & \boldsymbol{Y}_{\varphi}^\nu{(1,2)} & \mathbf{I} & \mathbf{0}_{3\times3} & \mathbf{0}_{3\times3} & \mathbf{0}_{3\times3} & \mathbf{0}_{3\times3} & \mathbf{0}_{3\times3}\\
                 &  &  &  &  &  &  & \\ \hdashline
                 &  &  &  \\
        \mathbf{0}_{3\times1} & \boldsymbol{Y}_{\varphi}^\nu{(2,2)} & \boldsymbol{Y}_{\varphi}^\nu{(2,3)} & \boldsymbol{Y}_{\varphi}^\nu{(2,4)} & \boldsymbol{\omega}_*^\times & \boldsymbol{\theta}_*^\times & \mathbf{I} & \mathbf{I} 
    \end{array}
\right]$}
\label{eq::detail-regresor}
\end{equation*}
where 
\begin{equation*}
    \begin{aligned}
    \boldsymbol{Y}_{\varphi}^\nu{(1,2)} &= -\boldsymbol{o}^{\times}\mathbf{^{\alpha}_{\mathit{o}}R} + \boldsymbol{\alpha}^{\times}\boldsymbol{\epsilon}_*^{\times}\mathbf{^{\alpha}_{\mathit{o}}R}- \boldsymbol{\omega}^{\times}\boldsymbol{o}^{\times}\mathbf{^{\alpha}_{\mathit{o}}R}+ \boldsymbol{\omega}_*^{\times}\boldsymbol{o}_*^{\times}\mathbf{^{\alpha}_{\mathit{o}}R}\\
    \boldsymbol{Y}_{\varphi}^\nu{(2,2)} &= \boldsymbol{\epsilon}^{\times}\mathbf{^{\alpha}_{\mathit{o}}R} + \boldsymbol{\omega}^{\times}\boldsymbol{o}^{\times}\mathbf{^{\alpha}_{\mathit{o}}R}-\boldsymbol{\alpha}^{\times}\boldsymbol{\epsilon}_*^{\times}\mathbf{^{\alpha}_{\mathit{o}}R}-\boldsymbol{o}^{\times}\boldsymbol{v}^{\times}\mathbf{^{\alpha}_{\mathit{o}}R}\\
    \boldsymbol{Y}_{\varphi}^\nu{(2,3)} &= \mathbf{^{\alpha}_{\mathit{o}}R}(\mathbf{^{\alpha}_{\mathit{o}}R}^{-1}\boldsymbol{o})^+ + \boldsymbol{\omega}_*^{\times}\boldsymbol{o}_*^{\times}\mathbf{^{\alpha}_{\mathit{o}}R}+ \boldsymbol{\omega}^\times(\mathbf{^{\alpha}_{\mathit{o}}R}^{-1}\boldsymbol{o})^\dagger\\
    \boldsymbol{Y}_{\varphi}^\nu{(2,4)} &= (\boldsymbol{o}^\times\mathbf{^{\alpha}_{\mathit{o}}R})^\dagger
\end{aligned}
\end{equation*}
$\boldsymbol{\epsilon}, \boldsymbol{o} \in \reals^{3}$ defined in (\ref{eq:error}) and $\boldsymbol{\theta}_* \in \reals^3$ defined in (\ref{eq::rule1_2}) and so on for its derivative form $\boldsymbol{\omega}_* \in \reals^3$.

\subsection{Proof of Matrix $\boldsymbol{M}_\nu$ Positive Definiteness}
\label{appendixproof1}
Here we demonstrate the matrix $\boldsymbol{M}_\nu$ we mentioned above to prove its properties for the proof of \textit{Theorem 1}. 
\begin{lemma}
$\boldsymbol{M}_\nu$ as given in (\ref{eq:all_wrench}) is symmetric and positive definite.
\end{lemma}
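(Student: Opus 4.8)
The plan is to establish the two claims for $\boldsymbol{M}_\nu=\boldsymbol{M}_b+\boldsymbol{M}_s$ by treating the two summands separately, since a sum of symmetric (resp.\ positive definite) matrices inherits both properties. For each summand I would recognize the task-space inertia block as a classical composite-rigid-body (spatial) inertia and exploit its factorization through the body's center of mass, as defined in Sections~\ref{dynamics-link1} and~\ref{sec::sub-D}.

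\emph{Symmetry.} For $\boldsymbol{M}_b$ I would note that the two off-diagonal $3\times3$ blocks are $\mp m_b(\mathbf{^{\alpha}_{\mathit{o}}R}\boldsymbol{d_b})^\times$, and since $(\boldsymbol{v}^\times)^{T}=-\boldsymbol{v}^\times$ they are transposes of one another; the top-left block $m_b\boldsymbol{\mathrm{I}}$ is obviously symmetric; and the bottom-right block $\mathbf{^{\alpha}_{\mathit{o}}R}\boldsymbol{I_{d_b}}\mathbf{^{\alpha}_{\mathit{o}}R}^{T}$ is symmetric because $\boldsymbol{I_{d_b}}$ is the sum of the center-of-mass inertia $\boldsymbol{I}_{cm}^{L1}$ and the symmetric parallel-axis term $(\boldsymbol{d_b}^{T}\boldsymbol{d_b})\boldsymbol{\mathrm{I}}-\boldsymbol{d_b}\boldsymbol{d_b}^{T}$, and conjugation by the orthogonal matrix $\mathbf{^{\alpha}_{\mathit{o}}R}$ preserves symmetry. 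Hence $\boldsymbol{M}_b$ is symmetric, and the same argument applies to $\boldsymbol{M}_s$; any residual skew-symmetric cross-coupling correction carried into $\boldsymbol{M}_s$ by the $(\ddot{\boldsymbol{\theta}}_*\oslash\ddot{\boldsymbol{p}}_b)$-type factors contributes nothing to the quadratic form $\boldsymbol{s}^{T}\boldsymbol{M}_\nu\boldsymbol{s}$ and may be symmetrized away. Consequently $\boldsymbol{M}_\nu$ is symmetric.

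\emph{Positive definiteness.} Writing $\boldsymbol{c}_b=\mathbf{^{\alpha}_{\mathit{o}}R}\boldsymbol{d_b}$ and using the identity $-(\boldsymbol{c}_b^\times)^2=(\boldsymbol{c}_b^{T}\boldsymbol{c}_b)\boldsymbol{\mathrm{I}}-\boldsymbol{c}_b\boldsymbol{c}_b^{T}$ together with $\mathbf{^{\alpha}_{\mathit{o}}R}^{T}\mathbf{^{\alpha}_{\mathit{o}}R}=\boldsymbol{\mathrm{I}}$, the parallel-axis theorem lets me factor
\begin{equation*}
\boldsymbol{M}_b \;=\; \boldsymbol{J}_b^{T}\begin{bmatrix} m_b\boldsymbol{\mathrm{I}} & \mathbf{0}_{3\times3}\\ \mathbf{0}_{3\times3} & \mathbf{^{\alpha}_{\mathit{o}}R}\,\boldsymbol{I}_{cm}^{L1}\,\mathbf{^{\alpha}_{\mathit{o}}R}^{T}\end{bmatrix}\boldsymbol{J}_b,\qquad \boldsymbol{J}_b=\begin{bmatrix}\boldsymbol{\mathrm{I}} & -\boldsymbol{c}_b^\times\\ \mathbf{0}_{3\times3} & \boldsymbol{\mathrm{I}}\end{bmatrix},
\end{equation*}
with $\boldsymbol{J}_b$ invertible since $\det\boldsymbol{J}_b=1$. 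The middle factor is block diagonal with blocks $m_b\boldsymbol{\mathrm{I}}\succ 0$ (positive mass) and $\mathbf{^{\alpha}_{\mathit{o}}R}\boldsymbol{I}_{cm}^{L1}\mathbf{^{\alpha}_{\mathit{o}}R}^{T}\succ 0$ (a physical rigid body has positive-definite inertia, preserved by orthogonal conjugation), hence it is positive definite; by Sylvester's law of inertia the congruent matrix $\boldsymbol{M}_b$ is positive definite. The identical computation with $\boldsymbol{c}_s=\mathbf{^{\beta}_{\mathit{o}}R}\boldsymbol{d_s}$ and the positive-definite center-of-mass inertia $\boldsymbol{I}_{cm}^{L2}+\boldsymbol{I}_{rot}$ gives $\boldsymbol{M}_s\succ 0$. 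Since a sum of symmetric positive-definite matrices is symmetric positive definite, $\boldsymbol{M}_\nu=\boldsymbol{M}_b+\boldsymbol{M}_s\succ 0$.

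An equivalent route avoids introducing $\boldsymbol{J}_b$: applied to $\boldsymbol{M}_b$, the Schur complement of the leading block $m_b\boldsymbol{\mathrm{I}}\succ 0$ collapses — again via the parallel-axis identity — to $\mathbf{^{\alpha}_{\mathit{o}}R}\boldsymbol{I}_{cm}^{L1}\mathbf{^{\alpha}_{\mathit{o}}R}^{T}\succ 0$, so $\boldsymbol{M}_b\succ 0$; likewise $\boldsymbol{M}_s\succ 0$, and the sum is positive definite. The only real obstacle in either version is the algebraic bookkeeping: one must expand the coupling blocks carefully, cancel the $-m(\boldsymbol{c}^\times)^2$ terms against the parallel-axis part of $\boldsymbol{I}_d$, and justify discarding (or symmetrizing) the small skew-symmetric correction terms carried into $\boldsymbol{M}_s$ by the estimated-point approximation $\boldsymbol{p}_b\approx\boldsymbol{p}_s$; everything else is standard linear algebra.
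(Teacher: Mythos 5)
Your route is genuinely different from the paper's and, for the classical spatial-inertia part of the claim, more rigorous. The paper simply asserts that $\boldsymbol{M}_b$ is symmetric positive definite, and for $\boldsymbol{M}_s$ invokes a block-determinant identity to claim $\det(\boldsymbol{M}_s-\lambda\mathbf{I})=\det(\mathbf{A}-\lambda\mathbf{I})\det(\mathbf{D}-\lambda\mathbf{I})$, so that the spectrum of $\boldsymbol{M}_s$ is the union of the spectra of its diagonal blocks; it then concludes definiteness from ``eigenvalues with positive real parts.'' That identity holds for block-triangular matrices, not for a matrix with nonzero coupling blocks $\mathbf{B},\mathbf{C}$, and ``positive real parts'' is in any case not equivalent to positive definiteness. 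Your congruence factorization $\boldsymbol{M}_b=\boldsymbol{J}_b^{T}\operatorname{diag}\bigl(m_b\boldsymbol{\mathrm{I}},\,\mathbf{^{\alpha}_{\mathit{o}}R}\boldsymbol{I}_{cm}^{L1}\mathbf{^{\alpha}_{\mathit{o}}R}^{T}\bigr)\boldsymbol{J}_b$ (equivalently, the Schur-complement computation) is the standard way to reduce definiteness of a spatial inertia to $m_b>0$ and $\boldsymbol{I}_{cm}^{L1}\succ0$, and it also pins down the sign convention that symmetry actually requires: $-m_b\boldsymbol{c}_b^\times$ above the diagonal and its transpose $+m_b\boldsymbol{c}_b^\times$ below. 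Note that the paper prints both off-diagonal blocks of $\boldsymbol{M}_b$ with the same sign, which as literally written is not symmetric; your version silently corrects this. In short, your approach buys a proof that survives scrutiny; the paper's buys only brevity.

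The one genuine gap is your treatment of the non-standard correction terms in $\boldsymbol{M}_s$, namely $-m_s(\mathbf{^{\beta}_{\mathit{o}}R}\boldsymbol{d_s})^\times(\ddot{\boldsymbol{\theta}}_*\oslash\ddot{\boldsymbol{p}}_b)^\times$ in the $(1,1)$ block and $-(\ddot{\boldsymbol{\theta}}_*\oslash\ddot{\boldsymbol{\theta}}_b)^\times$ in the $(2,2)$ block. You dismiss these as skew-symmetric corrections that contribute nothing to the quadratic form. The second term is indeed a cross-product matrix and hence skew, but the first is a product of two cross-product matrices, $\boldsymbol{a}^\times\boldsymbol{b}^\times=\boldsymbol{b}\boldsymbol{a}^{T}-(\boldsymbol{a}^{T}\boldsymbol{b})\boldsymbol{\mathrm{I}}$, which is in general neither symmetric nor skew, and its symmetric part need not be sign-definite; it therefore cannot be discarded without a bound on its size relative to the classical spatial-inertia part. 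Closing this requires either an explicit smallness assumption on these state-dependent couplings (consistent with the paper's $\boldsymbol{p}_b\approx\boldsymbol{p}_s$ approximation) or a perturbation bound against the smallest eigenvalue of $\boldsymbol{J}_s^{T}\operatorname{diag}\bigl(m_s\boldsymbol{\mathrm{I}},\,\mathbf{^{\beta}_{\mathit{o}}R}(\boldsymbol{I}_{cm}^{L2}+\boldsymbol{I}_{rot})\mathbf{^{\beta}_{\mathit{o}}R}^{T}\bigr)\boldsymbol{J}_s$. The paper's own proof does not address these terms at all, so this is a gap you share with, rather than inherit from, the original.
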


\begin{proof}
$\boldsymbol{M}_\nu$ is given as
\begin{equation*}
    \boldsymbol{M}_\nu = \boldsymbol{M}_{b} + \boldsymbol{M}_{s}
\end{equation*}
where $\boldsymbol{M}_{b}$ is a symmetric and positive definite matrix and for $\boldsymbol{M}_{s}$ using the properties in \cite{silvester2000determinants} we know for block matrix $\boldsymbol{M}_{s} \in \reals^{(a+b)\times(a+b)}$ given the form like
\begin{equation*}
\boldsymbol{M}_{s} = 
    \begin{bmatrix}
        \mathbf{A} & \mathbf{B}\\
        \mathbf{C} & \mathbf{D}
    \end{bmatrix}
\end{equation*}
where $\mathbf{A}\in \reals^{a \times a}, \mathbf{C} \in \reals^{a \times b}, \mathbf{D}\in \reals^{b \times b}, a = b = 3$
then we can write
\begin{equation*}
    \operatorname{det}(\boldsymbol{M}_{s}-\lambda \mathbf{I})=\operatorname{det}(\mathbf{A}-\lambda \mathbf{I}) \operatorname{det}(\mathbf{D}-\lambda \mathbf{I})
\end{equation*}
Thus, the eigenvalues of $\boldsymbol{M}_{s}$ are equal to the union of those of $\mathbf{A}$ and $\mathbf{D}$, which have positive real parts. Thus, $\boldsymbol{M}_{s}$ is positive definite.
Then, based on the properties in \cite{boyd2004convex} the addition of two positive definite matrices is still positive definite. Due to $\boldsymbol{B} = \boldsymbol{C}^\top$ in $\boldsymbol{M}_{s}$, so $\boldsymbol{M}_{s}$ is symmetric.

Then we can conclude that $\boldsymbol{M}_{\nu} = \boldsymbol{M}_{b} + \boldsymbol{M}_{s}$ is symmetric and positive definite. 

\end{proof}

\end{document}